\pgfplotsset{compat=newest}
\DeclareMathOperator*{\argmax}{arg\,max}
\DeclareMathOperator*{\argmin}{arg\,min}
\begin{document}

\title{Feature Selection from Differentially Private Correlations}

\author{Ryan Swope}
\authornote{Equal contribution.}
\email{Swope\_Ryan@bah.com}
\affiliation{%
  \institution{Booz Allen Hamilton}
  \city{Philadelphia}
  \state{Pennsylvania}
  \country{USA}
}

\author{Amol Khanna}
\authornotemark[1]
\email{Khanna\_Amol@bah.com}
\affiliation{%
  \institution{Booz Allen Hamilton}
  \city{Boston}
  \state{Massachusetts}
  \country{USA}}

\author{Philip Doldo}
\authornotemark[1]
\email{Doldo\_Philip@bah.com}
\affiliation{%
  \institution{Booz Allen Hamilton}
  \city{Baltimore}
  \state{Maryland}
  \country{USA}
}

\author{Saptarshi Roy}
\email{roysapta@umich.edu}
\affiliation{%
 \institution{University of Michigan}
 \city{Ann Arbor}
 \state{Michigan}
 \country{USA}}

\author{Edward Raff}
\email{Raff_Edward@bah.com}
\affiliation{%
 \institution{Booz Allen Hamilton}
 \institution{University of Maryland, \\Baltimore County}
 \city{Syracuse}
 \state{New York}
 \country{USA}}

\begin{abstract}
  Data scientists often seek to identify the most important features in high-dimensional datasets. This can be done through $L_1$-regularized regression, but this can become inefficient for very high-dimensional datasets. Additionally, high-dimensional regression can leak information about individual datapoints in a dataset. In this paper, we empirically evaluate the established baseline method for feature selection with differential privacy, the two-stage selection technique, and show that it is not stable under sparsity. This makes it perform poorly on real-world datasets, so we consider a different approach to private feature selection. We employ a correlations-based order statistic to choose important features from a dataset and privatize them to ensure that the results do not leak information about individual datapoints. We find that our method significantly outperforms the established baseline for private feature selection on many datasets. 
\end{abstract}

\begin{CCSXML}
<ccs2012>
<concept>
<concept_id>10002978.10002986</concept_id>
<concept_desc>Security and privacy~Formal methods and theory of security</concept_desc>
<concept_significance>500</concept_significance>
</concept>
<concept>
<concept_id>10002978.10003018</concept_id>
<concept_desc>Security and privacy~Database and storage security</concept_desc>
<concept_significance>500</concept_significance>
</concept>
</ccs2012>
\end{CCSXML}

\ccsdesc[500]{Security and privacy~Formal methods and theory of security}
\ccsdesc[500]{Security and privacy~Database and storage security}

\keywords{Differential Privacy, Feature Selection, Model Selection, Sparse, Correlations, Linear Regression}

\received{20 February 2007}
\received[revised]{12 March 2009}
\received[accepted]{5 June 2009}

\maketitle

\section{Introduction}

Linear models remain one of the most common and widely used techniques in practice and research today. In particular, linear regression and logistic regression are straightforward to solve using either general-purpose convex solvers like Limited-memory BFGS~\cite{Liu1989} or bespoke optimizers like those provided in LIBLINEAR and other libraries~\cite{scikit-learn,10.5555/1390681.1442794}. 
In high-dimensional situations, where the number of features $d$ is greater than the number of samples $N$, the need to perform feature selection to avoid over-determined systems is particularly pertinent. While classic approaches like forward-backward selection~\cite{JMLR:v20:17-334} and mutual-information~\cite{Ross2014,wang2019fast} are still studied, the incorporation of sparsity-inducing penalties has become the predominant approach.

The induction of sparsity in the solution is useful from a pure engineering, practical deployment, and analytical understanding since the $L_1$ penalty was introduced via the ``LASSO'' regularizer by~\citet{Tibshirani1994}. The $L_1$ penalty further has provable advantages in high dimensional settings~\cite{Ng2004} that have led to significant effort in custom solvers~\cite{Yuan2012,frandi_fast_2016,Friedman2010} and the broad preference for making feature selection a joint process with the regression itself~\cite{hastie_best_2020}.

However, the widespread success of $L_1$ based optimization for joint solving of feature selection and model weights is not so clear cut when we are concerned with the privacy of the data used to build the model. In such a case, there is a need to introduce an additional framework to protect data privacy. Specifically, differential privacy is a statistical framework that guarantees data privacy in an algorithm \cite{near2021programming}. Given parameters $\epsilon$ and $\delta$, on any two datasets $D$ and $D'$ differing on one example, an approximate differentially private algorithm $\mathcal{A}$ satisfies $\Pr \left[ \mathcal{A}(D) \in O \right] \leq \exp\{ \epsilon\} \Pr \left[ \mathcal{A}(D') \in O \right] + \delta$ for any $O \subseteq \text{image}(\mathcal{A})$. Note that lower values of $\epsilon$ and $\delta$ correspond to stronger privacy. Differential privacy is typically achieved by adding calibrated amounts of noise in the mechanism or to the output of $\mathcal{A}$. 

Differentially private regression algorithms ensure that the weight of a regression does not reveal significant information about its training data. This is especially important in high-dimensional models, where the ratio of parameters to training data points is higher, and thus, the parameters can encode more information about the data points. For this reason, several differentially private high-dimensional regression algorithms have been developed. \textbf{However, these algorithms have to add noise scaled by the dimensionality of the data, which can quickly overwhelm the signal, and destroy all sparsity of the solution} \cite{khanna2024sok}. The noise added by differential privacy proves to be a significant roadblock to successful DP approaches that induce hard-zeros as the feature selection step because the noise continually knocks solutions away from exact zero components. For this reason, we choose to explore differentially private feature selection strategies which reduce the dimensionality of the dataset prior to using private regression algorithms. 

Few past works have considered differentially private feature selection, and of those that have, most produce computationally infeasible procedures \cite{roy2023computational,thakurta2013differentially}. \citet{kifer2012private} develop a ``two-stage'' method which is computationally feasible but still intensive as it requires training multiple models. It is inspired by the $L_1$ penalty as it involves training multiple $L_1$-regularized models, but a key insight we empirically demonstrate is that\textbf{ $L_1$ penalization is not algorithmically stable, leading to inconsistent performance under differential privacy}. 

A simple yet effective non-private feature selection algorithm is Sure Independence Screening (SIS), which selects the $k$ features with the highest absolute correlation to the target vector \cite{fan2008sure}. This method is also flexible, allowing users to switch out correlation with any other metric they believe to be better suited to identifying important variables. We compare the non-private SIS algorithm to a non-private version of \cite{kifer2012private} to disambiguate the impact of noise vs algorithmic instability of \cite{kifer2012private}. This test shows that SIS performs better, and thus isolates the instability of $L_1$-regularization as a major factor in the lower performance of \cite{kifer2012private}. We then provide intuition for why the SIS algorithm will perform better in the private setting than the two-stage approach and proceed to privatize SIS. \textbf{Our experiments show that private SIS performs better than two-stage on a variety of high-dimensional datasets, achieving similar or improved accuracies for $\epsilon$ in a usable range of $[1, 10]$, while being easy to reason and prove}. 

Here, we provide an overview of the following sections. In \autoref{sec:related_work}, we will review related works on non-private and private feature selection strategies. In \autoref{instability}, we will compare a non-private version of two-stage with a simple correlation-based selection approach and show that the baseline is significantly less stable under sparsity than the simple approach. In \autoref{sec:private_sis}, we will describe how to make this correlation-based selection private, and will provide mathematical arguments highlighting when our method will perform well. \autoref{sec:experiments} will detail experiments and their results, with \autoref{sec:conclusion} concluding the paper. We make special note that to implement a useful and private SIS feature selector, a private top-$k$ selection step was needed. The only method for top-$k$ selection that improved over the baseline was the canonical Lipschitz mechanism \cite{shekelyan2022differentially}, which in its original presentation is difficult to prove and understand. As an additional contribution to this work, we re-state and simplify the exposition of this valuable technique in \autoref{sec:appendix} to improve its utility. 

\section{Related Work} \label{sec:related_work}

In this section, we will describe related works on high-dimensional regression in the non-private and private settings. We aim to provide an overview of the space while focusing on correlation-based selection strategies in the non-private setting. We make note that the notions of $L_1$ based feature selection as a process independent of the optimization process has been developed in the non-private setting via ``screening rules''~\cite{rakotomamonjy_screening_2019,Larsson2021,ndiaye_gap_2017,wang_lasso_2013,Ogawa2013} that attempt to identify coefficients which will have a zero value once the optimization is done, and so can be discarded early for computational efficiency. Unfortunately, negative results have been shown for converting screening rules into a differentially private form~\cite{khanna2023challenge}. It is for this reason that we look back to the older approach of a separate feature selection process from model training to see if improved results can be obtained. Below, we will review the pertinent non-private and private high-dimensional regression literature. 

\subsection{Non-private High-dimensional Regression}

Statisticians typically constrain the structure of high-dimensional regressions. One such constraint is coefficient sparsity, which we will focus on in this paper. 

Constraining sparsity is equivalent to $L_0$-constrained or penalized regression. Indeed, it is not difficult to see that for varying $\lambda$, both of 
\begin{align}
    &\argmin_{\mathbf{w} \in \mathbb{R}^d: \ \lVert \mathbf{w} \rVert_0 \leq \lambda} \sum_{i = 1}^{N}   \ell(\mathbf{x}_i, y_i; \mathbf{w}) \\ 
    &\argmin_{\mathbf{w} \in \mathbb{R}^d} \sum_{i = 1}^{N}   \ell(\mathbf{x}_i, y_i; \mathbf{w}) + \lambda \lVert \mathbf{w} \rVert_0
\end{align}
will produce solutions of varying sparsity. However, $L_0$-constrained or penalized regression is NP-hard, making it impossible to find solutions to these problems in polynomial time \cite{wainwright2019high}. 

To make optimization feasible, $L_1$-constrained or regularized regression can be used. Here, the optimization functions 
\begin{align}
    &\argmin_{\mathbf{w} \in \mathbb{R}^d: \ \lVert \mathbf{w} \rVert_1 \leq \lambda} \sum_{i = 1}^{N}   \ell(\mathbf{x}_i, y_i; \mathbf{w}) \\ 
    &\argmin_{\mathbf{w} \in \mathbb{R}^d} \sum_{i = 1}^{N}   \ell(\mathbf{x}_i, y_i; \mathbf{w}) + \lambda \lVert \mathbf{w} \rVert_1
\end{align}
are convex and feasible. However, employing $L_1$ constraints or penalties creates a bias away from the $L_0$ solution, meaning the optimal $\mathbf{w}$ in Equations (3) and (4) will not equal those in Equations (1) or (2) \cite{wainwright2019high}. Nevertheless, Donoho and Huo and Donoho and Elad showed that the support sets of penalized $L_0$ solutions can be found through optimization with $L_1$ penalties when $\mathbf{w}$ is sufficiently sparse \cite{donoho2001uncertainty,donoho2003maximal}. 

However, even if convex optimization is feasible, it can be increasingly difficult on very high-dimensional datasets. For this reason, statisticians have developed a variety of feature selection mechanisms to remove irrelevant features prior to optimization. We describe a subset of these here. 

One of the first methods for feature selection which does not optimize a variant of Equations (3) or (4) is Sure Independence Screening (SIS) \cite{fan2008sure}. This method treats each feature as independent and measures the correlation between each feature and the target variable. The top-$k$ absolutely correlated features are retained, with the others being screened out prior to optimization. Experiments with SIS demonstrate that it works particularly well on data with independent features but can also perform reasonably well when features are mildly correlated. 

Similar to SIS are screening rules, which seek to bound the dual solutions of linear optimization problems within a compact set \cite{xiang2016screening}. This compact set can identify features which are surely not part of the feature set selected by $L_1$-regularized estimators, but can do so without performing optimization. In this way, screening rules can safely remove features prior to optimization to make the optimization problem more feasible. 

The compact sets which screening rules use can be spheres or spheres in combination with halfspaces. As the number of halfspaces employed grows, the computational complexity of screening grows, and often with marginal benefits. As a result, spherical sets are often used. Interestingly, in the case of spherical sets, there exists an equivalence between SIS and screening rules for some regularization value of $\lambda$ \cite{xiang2016screening}. This equivalence demonstrates that it is reasonable to select features for $L_1$-regularized problems based on correlations. 

\subsection{Private High-dimensional Regression}

High-dimensional regression has also been considered in differential privacy. We review some key works here, but see the cited survey by Khanna et al. for a more comprehensive view \cite{khanna2024sok}. 

Kifer et al. first considered high-dimensional differentially private regression by building a two-stage procedure \cite{kifer2012private}. The first stage of this procedure privately chooses a support set of size $k$ by splitting the data into $\sqrt{N}$ blocks of size $\sqrt{N}$ and privately computing which features are most consistently included in the supports of regression estimators built on each block. Once the algorithm finds a support set, it trains a final model relying on only the features in the support set. To the best of our knowledge, this algorithm is the only computationally efficient algorithm which employs feature selection to build high-dimensional regression estimators with differential privacy. 

Heuristically, we see two challenges with Kifer et al.'s approach. First, their algorithm is computationally intensive: it requires building $\sqrt{N} + 1$ high-dimensional regression estimators, which may not be possible if the dimensionality of the data is very high. Second, their algorithm relies on the algorithmic stability of sparse estimators: they need disjoint partitions of the dataset to agree on which features should be included in the support set. This may be an unreasonable assumption, as even in the non-private setting, sparse estimators are not algorithmically stable \cite{xu2011sparse}. Combining their requirement for algorithmically stable sparse support selection with a need for private (noisy) support selection may render the final support set ineffective. This heuristic analysis is the reason why we chose to study a procedure which did not require building regression estimators for support selection. 

Other methods for differentially private high-dimensional regression exist, and they typically rely on private optimization techniques. We do not provide details on these optimization techniques here, since this is out-of-scope for the methods and experiments in this paper. Instead, we comment that similar to the non-private case, these techniques become increasingly inefficient when the dimensionality of the data is very high, and a method to reduce the support set prior to optimization would be useful \cite{khanna2024sok}. Additionally, the utility of each of these methods relies on the dimensionality of their input data, and as the dimensionality increases, their performance will decrease \cite{raff2024scaling}. If instead these algorithms received a support set of reasonable size after feature selection, they could operate with better expected utility. 

Given the usefulness of a private feature selection mechanism, we choose to study the effectiveness of feature selection from private correlations. We empirically compare the performance of our method to the feature selection stage of the two-stage approach, as this is the only private feature selection strategy which is computationally efficient. 

\section{Evaluating the Instability of the Two-stage Approach} \label{instability}

A line of recent works on private high-dimensional regression has discussed the role of algorithmic instability in Kifer et. al.'s two-stage approach \cite{khanna2023differentially,khanna2023challenge,raff2024scaling,khanna2024sok}. However, none of these works seem to have evaluated whether this instability affects actual performance through an empirical lens. 

In this section, we run two simple experiments to show that even in the non-private setting, the two-stage approach suffers from instability compared to the simple SIS baseline. In the first experiment, we generate 100 datapoints $\mathbf{X} \sim \mathcal{N}(\mathbf{0}, \mathbf{I}_{100})$ where each $\mathbf{x}_i \in \mathbb{R}^{100}$. We then use a weight vector $\mathbf{w}_1 = \begin{bmatrix} 1 & 1 & 1 & 1 & 1 & 0 & \cdots & 0 \end{bmatrix}^\top$ with value 1 in components 1 through 5 and value 0 in components 6 through 100. Finally, we generate targets $y_i = \mathbf{x}_i^\top \mathbf{w}_1 + \epsilon_i$, where $\epsilon_i \sim \mathcal{N}(0, 0.1)$. 

With this dataset, we employed the first stage of Kifer et. al.'s two-stage method with a non-private selection step to identify a support set. We did this by splitting the data into $\sqrt{N}$ blocks of size $\sqrt{N}$ and identifying the 5 features which were most consistently included in the supports of the $\sqrt{N}$ regression estimators. We also employed the non-private SIS algorithm on this dataset. SIS chose a support set by identifying which 5 features in the dataset were most correlated with the target variable. 

\begin{figure}[t]
    \centering
    \includesvg[width=\linewidth]{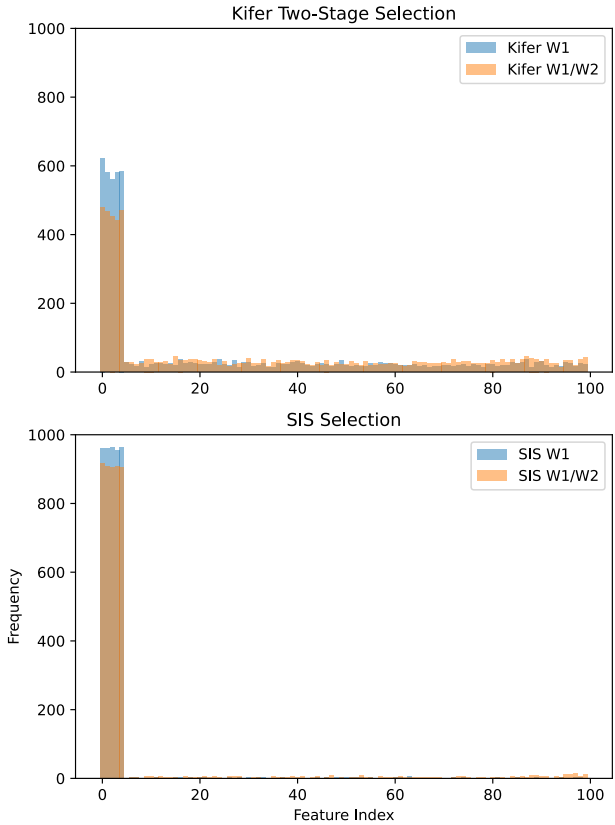}
    \caption{Occurrences of selecting feature $n$ in the two-stage and SIS algorithms. An ideal result for both experiments would select features 1 through 5 1000 times and all other features zero times. SIS is closer to the ideal result than two-stage.}
    \label{fig:kifer_stability_sec3}
\end{figure}

This experiment was repeated 1000 times, and results are included as experiment W1 in \autoref{fig:kifer_stability_sec3}. It is clear that SIS chooses the true nonzero features much more often than the two-stage method, making it a better feature-selection mechanism for this problem. 

The above experiment demonstrates that an intuitive method like SIS can outperform the two-stage method on a simple dataset. However, conditions of real-datasets are rarely so ideal. To simulate how outliers affect the two-stage and SIS methods, we employ the following experiment. We generate 100 datapoints $\mathbf{X} \sim \mathcal{N}(\mathbf{0}, \mathbf{I}_{100})$ where each $\mathbf{x}_i \in \mathbb{R}^{100}$, like above. We then use two weight vectors: $\mathbf{w}_1 = \begin{bmatrix} 1 & 1 & 1 & 1 & 1 & 0 & \cdots & 0 \end{bmatrix}^\top$ and $\mathbf{w}_2 = \begin{bmatrix} 0 & \cdots & 0 & 1 & 1 & 1 & 1 & 1 \end{bmatrix}^\top$. $\mathbf{w}_1$ the same vector described in the previous experiment, whereas $\mathbf{w}_2$ has value 0 in components 1 through 95 and value 1 in components 96 through 100. Finally, using the first 90 datapoints, we generate $\mathbf{y}_1 = \mathbf{X}_{[1:90, :]}\mathbf{w}_1 + \bm{\epsilon}_{[1:90]}$ and $\mathbf{y}_2 = \mathbf{X}_{[91:100, :]}\mathbf{w}_2 + \bm{\epsilon}_{[91:100]}$, where $\epsilon_i \sim \mathcal{N}(0, 0.1)$. When constructing our final dataset, we choose to repeatedly intersperse 9 datapoints generated from $\mathbf{w}_1$ with 1 datapoint generated from $\mathbf{w}_2$ so that each disjoint block of the two-stage algorithm receives one outlier. 

This experiment was repeated 1000 times, and the results are included in \autoref{fig:kifer_stability_sec3} as W1/W2. It is clear that even in the presence of these outliers, SIS still chooses the nonzero components corresponding $\mathbf{w}_1$ much more frequently than the two-stage method. This is desirable if we believe that datapoints generated with $\mathbf{w}_2$ should be attributed to noise, and it demonstrates that SIS can outperform the two-stage method in the presence of such noise. 

\section{Privatizing SIS} \label{sec:private_sis}

In this section, we will detail our approach to private feature selection. We will begin by describing our algorithm and go on to provide a theoretical analysis. 

\subsection{Feature Selection from Private Correlations}

Given the performance of SIS in the previous section, our algorithm employs a privatized version of SIS. As a reminder, SIS measures the correlation between each feature and the target variable. The top-$k$ absolutely correlated features are retained. To privately find the top-$k$ highest absolutely correlated features with a target variable, we need to compute private correlations. To do this, we need to bound the sensitivity of the dot product between a feature and the target vector. Unlike the non-private SIS algorithm, which centers and normalizes each feature to a variance of 1, we require that the infinity-norm of each column in the dataset is 1 and the infinity-norm of the target vector is 1.\footnote{In order to have a better approximation for private correlations, the columns in the dataset must also be centered. This is a common preprocessing technique in regression, and prior works on private regression have assumed datasets with bounded infinity-norms and centered features, see many in \cite{khanna2024sok}. We also operate in this setting, but we do recognize that performing this preprocessing in a private manner can be challenging. In our experience, a lot of privacy engineering teams will perform preprocessing in a nonprivate manner, despite this not being ideal. Very recent work has attempted to tackle preprocessing in a private manner, see \cite{hu2024provable}.} One reason we choose to privatize SIS is because it is easy to understand and can be easily adapted.

In the following steps, $\mathbf{X} \in \mathbb{R}^{N \times d}$ is the design matrix, with $\mathbf{x}_{(i)}$ representing the $i^{\text{th}}$ column (or feature) of $\mathbf{X}$. $\mathbf{y} \in \mathbb{R}^{N}$ is the vector of targets. DP-SIS works as follows in Algorithm 1:

\begin{algorithm}
\label{alg:dpsis}
\caption{DP-SIS}
\begin{algorithmic}[1]
    \Require Design matrix $\mathbf{X} \in \mathbb{R}^{N \times d}$ with infinity-norm of each column at most 1, target vector $\mathbf{y} \in \mathbb{R}^{N}$ with infinity-norm at most 1, privacy parameter $\epsilon$.
    \State Employ a private top-$k$ selection strategy with parameter $\epsilon$ given that the sensitivity of $\lvert \mathbf{x}_{(i)}^{\top}\mathbf{y} \rvert$ is 1. The ``score'' or ``quality'' of feature $i$ is measured with the absolute correlation of the feature with the target, namely $\lvert \mathbf{x}_{(i)}^{\top}\mathbf{y \rvert}$. The private top-$k$ algorithm returns $k$ indices, corresponding to the selected features. 
\end{algorithmic}
\end{algorithm}

This algorithm is simple to understand and implement, but we believe that it may outperform Kifer et al.'s two-stage approach since it employs all the data and does not rely on the algorithmic stability of sparse estimators. In the following section, we provide a theoretical understanding of this algorithm. 

Note that for DP-SIS to work it is necessary to use a high-quality private top-$k$ selection algorithm. We initially tested mechanisms in \cite{qiao2021oneshot}, \cite{gillenwater2022joint}, and \cite{durfee2019practical}, but found that they added too much noise to yield favorable results. However, we found that the canonical Lipschtiz algorithm by Shekelyan \& Loukides does work, though it is not easily accessible in its original presentation. As a significant component of our work, we re-derived and formalized the canonical Lipschtiz mechanism, presenting it in a more accessible way. This explanation is placed in the appendix, highlighting that we are not claiming Shekelyan \& Loukides innovation, but we believe that our explanation was necessary and will support other works in the future. 

\subsection{Theoretical Analysis}

To better understand when this method works well, we provide the following analysis. We seek to identify the probability that DP-SIS identifies the same features as non-private SIS. 

\begin{theorem}
    Let $\mathbf{X}$ and $\mathbf{y}$ be the design matrix and target vector after transformations in steps 1 and 2. Then $\lvert (\mathbf{X}^\top\mathbf{y})_i \rvert = \lvert \mathbf{x}_{(i)}^{\top}\mathbf{y} \rvert$. Denote $\left[ \lvert \mathbf{X}^\top\mathbf{y} \rvert \right]_j$ to be the $j^\text{th}$ largest element in $\lvert \mathbf{X}^\top\mathbf{y} \rvert$. 
    
    We are trying to find the top-$k$ highest absolutely correlated features with the target variable. Assume $\left[ \lvert \mathbf{X}^\top\mathbf{y} \rvert \right]_k - \left[ \lvert \mathbf{X}^\top\mathbf{y} \rvert \right]_{k + 1} = \xi$. Then the probability that DP-SIS exactly identifies the true top-$k$ features is at least \[1 - \exp{\left\{ k \log \left( \frac{d}{k} \right) + \log \left( c_{d, k} \right) - \frac{\xi \gamma \epsilon}{2} \right\} }, \]
    where $c_{d, k} = {d \choose k} / \frac{d^k}{k^k} \leq k$ and $\gamma$ is a hyperparameter of canonical Lipschitz which must be between 0 and 1. 
\end{theorem}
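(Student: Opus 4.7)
My plan is to treat the canonical Lipschitz mechanism as a black-box exponential-style private top-$k$ selector whose probability of outputting a specific $k$-subset of features decays exponentially in the gap between that subset's total score and the score of the true top-$k$ subset. The scoring function is $q(S) = \sum_{i \in S} \lvert \mathbf{x}_{(i)}^\top \mathbf{y} \rvert$, whose sensitivity under the infinity-norm assumptions is 1 (argued via the single-coordinate bound on each dot product). From the appendix treatment of the Lipschitz mechanism, one can extract a bound of the form
\begin{equation*}
\Pr[\text{output} = S] \;\leq\; \exp\!\Bigl(-\tfrac{\gamma \epsilon}{2}\bigl(q(S^\star) - q(S)\bigr)\Bigr),
\end{equation*}
where $S^\star$ is the optimal top-$k$ set and $\gamma \in (0,1)$ is the Lipschitz hyperparameter that tunes the sharpness of the distribution.

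Next, I would use the gap assumption to lower bound the score deficit of every wrong subset. If $S \neq S^\star$, then $S$ must omit at least one index from $S^\star$ and include at least one index outside $S^\star$. By the definition of the order statistics $[\lvert \mathbf{X}^\top \mathbf{y}\rvert]_j$, swapping any index in $S^\star$ for any index outside $S^\star$ costs at least $[\lvert \mathbf{X}^\top \mathbf{y}\rvert]_k - [\lvert \mathbf{X}^\top \mathbf{y}\rvert]_{k+1} = \xi$, so $q(S^\star) - q(S) \geq \xi$ for every wrong $S$. This yields the uniform per-subset bound $\Pr[\text{output} = S] \leq \exp(-\xi \gamma \epsilon / 2)$.

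Then I would apply a union bound over all wrong $k$-subsets. There are $\binom{d}{k} - 1 < \binom{d}{k}$ such subsets, so
\begin{equation*}
\Pr[\text{DP-SIS is incorrect}] \;\leq\; \binom{d}{k}\exp\!\bigl(-\tfrac{\xi \gamma \epsilon}{2}\bigr).
\end{equation*}
Finally, rewriting $\binom{d}{k} = c_{d,k}\cdot (d/k)^k$ using the definition of $c_{d,k}$ converts the prefactor into $\exp\bigl(k\log(d/k) + \log c_{d,k}\bigr)$, which when combined gives the stated lower bound on the success probability.

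The main obstacle I anticipate is pinning down the exact per-output probability inequality for the canonical Lipschitz mechanism with the correct dependence on $\gamma$, $\epsilon$, and the sensitivity; this is the nontrivial piece and is precisely the content that the paper defers to the appendix. Secondary care is needed to verify that the sensitivity of $\lvert \mathbf{x}_{(i)}^\top \mathbf{y}\rvert$ is indeed $1$ under the stated infinity-norm normalizations (using $\lvert |a+c| - |a|\rvert \leq |c|$ and the bound on a single row contribution), since an inflated sensitivity would change the constant in front of $\xi\gamma\epsilon$ and break the claimed bound. The gap argument and the union bound are routine once those pieces are in place.
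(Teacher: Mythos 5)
Your overall architecture (a per-subset probability bound, a gap argument, a union bound over $\binom{d}{k}$ subsets, and the identity $\binom{d}{k} = c_{d,k}\,(d/k)^k$) reproduces the shape of the final expression, but the load-bearing step rests on a mischaracterization of the mechanism, and this is a genuine gap. The canonical Lipschitz mechanism does \emph{not} score a candidate subset $S$ by the summed utility $q(S)=\sum_{i\in S}\lvert \mathbf{x}_{(i)}^\top\mathbf{y}\rvert$; it is a noisy-argmax over subsets using the canonical loss $\mathrm{LOSS}(y\mid x)=(1-\gamma)x_{[h+1]}-\gamma x_{[t]}$, which depends only on the best \emph{omitted} and worst \emph{included} order statistics of the score vector. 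In particular, $\gamma$ is a parameter of that loss function, not a ``sharpness'' or temperature on an exponential-mechanism distribution, so the inequality $\Pr[\text{output}=S]\leq\exp\bigl(-\tfrac{\gamma\epsilon}{2}(q(S^\star)-q(S))\bigr)$ is not something you can extract from the appendix: there is no mechanism in the paper for which it holds as stated. Moreover, your sensitivity claim is wrong for the utility you chose: changing one individual (one row) perturbs \emph{each} of the $k$ dot products in the sum by up to the single-row bound, so $q$ has sensitivity on the order of $k$, not $1$; an exponential mechanism run honestly on $q$ would therefore yield an exponent of order $-\xi\epsilon/(2k)$, which does not match the theorem. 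That your final exponent $\xi\gamma\epsilon/2$ agrees with the statement is a coincidence of inserting $\gamma$ by hand.

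The paper's proof takes a different and much shorter route: it invokes Theorems A.19 and A.20 of Shekelyan \& Loukides, which give a high-probability upper bound of $\tfrac{2}{\gamma\epsilon}\bigl(k\log(d/k)+\log c_{d,k}-\log\alpha\bigr)$ on the score deficit of the worst selected element relative to the true $k^{\text{th}}$ largest score, observes that if this deficit is below $\xi$ then every selected index must lie in the true top-$k$ (hence the sets coincide exactly), and solves for $\alpha$. Your approach could be repaired along similar structural lines by replacing $q$ with the canonical loss: the minimal loss gap between the optimal class $\mathcal{C}_{k-1,k}$ and any incorrect utility class $\mathcal{C}_{h,t}$ (which necessarily has $t\geq k+1$) is $\gamma\bigl(x_{[k]}-x_{[k+1]}\bigr)=\gamma\xi$, which after the mechanism's $\epsilon/(2\Delta_{\mathrm{LOSS}})$ scaling produces the correct $\xi\gamma\epsilon/2$ exponent; but you would still need to establish the per-output (or per-utility-class, weighted by class size) probability bound for noisy-argmax with the chosen noise distribution, which is precisely the content the paper outsources to the cited theorems.
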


\begin{proof}
    This statement follows directly from theorems A.19 and A.20 by Shekelyan \& Loukides \cite{shekelyan2022differentially}.\footnote{Theorem A.20 of their paper has a typo: the positions of $\vec{x}_{[T]}$ and $\vec{x}_{[k]}$ should be switched. This is clear when reading the derivation of Theorems A.19 and A.20.} This theorem provides a high-probability bound for the difference between the smallest privately selected element and the true $k^\text{th}$ largest element. If this difference is less than $\xi$, we know that the smallest privately selected element is the true $k^\text{th}$ largest element, and thus the sets overlap exactly. 

    Given this, we must solve \[ \frac{2}{\gamma \epsilon} \left( k \log  \left( \frac{d}{k} \right) - \log \alpha + \left( c_{d, k} \right) \right) < \xi \] for $\alpha$. Rearranging this inequality, we find \[\alpha > \exp{\left\{ k \log \left( \frac{d}{k} \right) + \log \left( c_{d, k} \right) - \frac{\xi \gamma \epsilon}{2} \right\} }. \] Theorem A.20, we know that with probability at least $1 - \alpha$, the smallest privately selected element is less than $\xi$ far from the true $k^\text{th}$ largest element. Choosing the smallest $\alpha$ from the inequality above produces the result. 
\end{proof}

Although it is likely difficult to identify the value of $\xi$ privately, this analysis is useful to understand how the hyperparameters of DP-SIS impact its performance. As $k$ and $d$ increase, it becomes less likely for the DP-SIS to exactly identify the top-$k$ scores. This makes sense - as $k$ and $d$ increase, the algorithm must add more noise which produces a higher likelihood for random variations in selected features. Next, as $\xi$, $\gamma$, and $\epsilon$ increase, DP-SIS will be more consistent with the non-private algorithm. This also matches intuition - higher $\xi$ means greater separation between the $k$- and $k + 1$-th scores and higher $\epsilon$ means weaker privacy and less noise. Although our analysis indicates that higher $\gamma$ would also produce better results, the top-$k$ simulations done in Shekelyan \& Loukides indicates that for the average case, $\gamma = \frac{1}{2}$ performs better than $\gamma = 1$, so we use $\gamma = \frac{1}{2}$ in our experiments \cite{shekelyan2022differentially}.

\section{Experiments} \label{sec:experiments}

\begin{table}[t]
    \centering
    \begin{tabular}{lccc} \toprule
        \textbf{Dataset} & \textbf{n} & \textbf{p} & $\mathbf{R^2}$ \\
        \midrule
        Alon & 62 & 2000 & 0.4822 \\
        Borovecki & 31 & 22283 & 0.9011 \\
        Burczynski & 127 & 22283 & 0.2175 \\
        Chiaretti & 128 & 12625 & 0.2705 \\
        Chin & 118 & 22215 & 0.5879 \\
        Chowdary & 104 & 22283 & 0.8338 \\
        Christensen & 217 & 1413 & 0.7870 \\
        Golub & 72 & 7129 & 0.6819 \\
        Gordon & 181 & 12533 & 0.2432 \\
        Gravier & 168 & 2905 & 0.0689 \\
        Khan & 63 & 2308 & 0.6961 \\
        Shipp & 77 & 7129 & 0.4204 \\
        Singh & 102 & 12600 & 0.6449 \\
        Sorlie & 85 & 457 & 0.9300 \\
        Su & 102 & 5564 & 0.9602 \\
        Subramanian & 50 & 10100 & 0.5340 \\
        Tian & 173 & 12625 & 0.0080 \\
        West & 49 & 7129 & 0.7044 \\
        Yeoh & 248 & 12625 & 0.4611 \\ \bottomrule
    \end{tabular}
    \caption{Summary of datasets. $\mathbf{R^2}$ are included for nonprivate LASSO regressions with $\mathbf{\lambda=0.1}$. The $\mathbf{R^2}$ indicate how ``sparsely linear'' a dataset is.}
    \label{tab:dataset_linearity}
\end{table}

\begin{figure}[t]
    \centering
    \begin{subfigure}{}
        \includegraphics[width=0.45\textwidth]{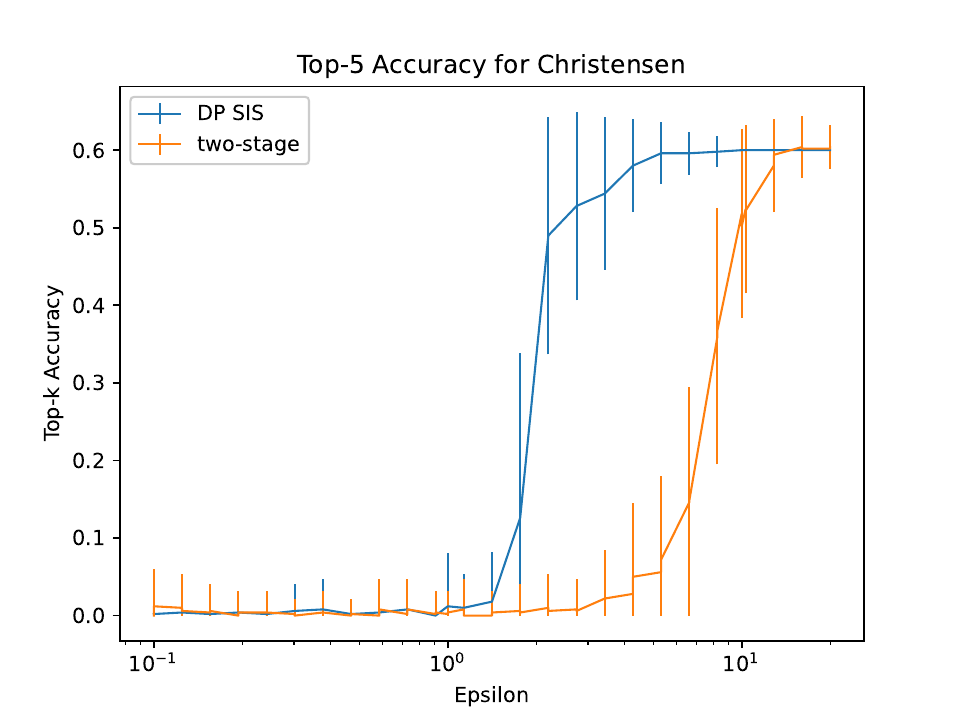}
    \end{subfigure}
    \hfill
    \begin{subfigure}{}
        \includegraphics[width=0.45\textwidth]{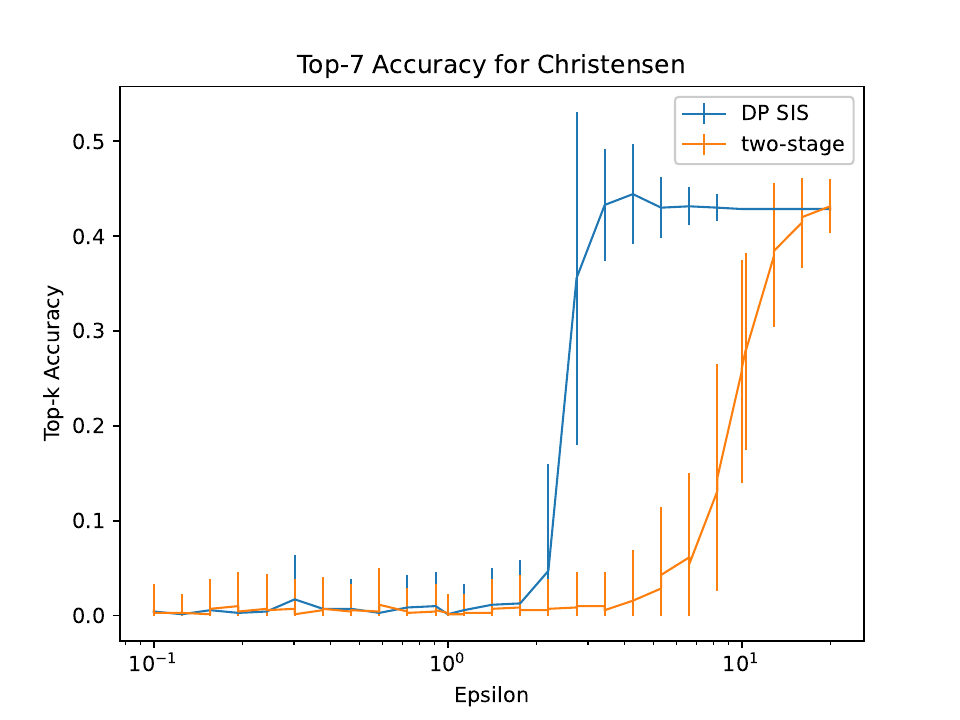}
    \end{subfigure}

    \caption{Top-$k$ accuracy of models fit on features selected from on the Christensen dataset. DP-SIS outperforms the two-stage mechanism on $\epsilon$ values between $10^0$ and $10^1$, which are commonly used for private computation \cite{near2023guidelines}.}
    \label{fig:topk_christensen_subplots}
\end{figure}
\begin{figure}[t]
    \centering
    \begin{subfigure}{}
        \includegraphics[width=0.45\textwidth]{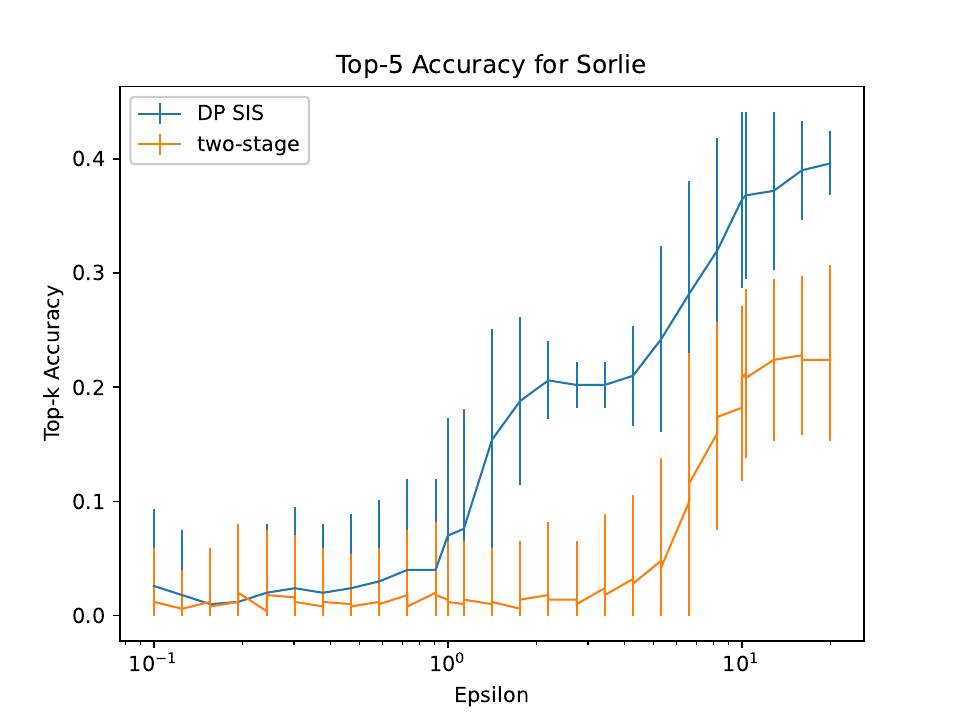}
    \end{subfigure}
    \hfill
    \begin{subfigure}{}
        \includegraphics[width=0.45\textwidth]{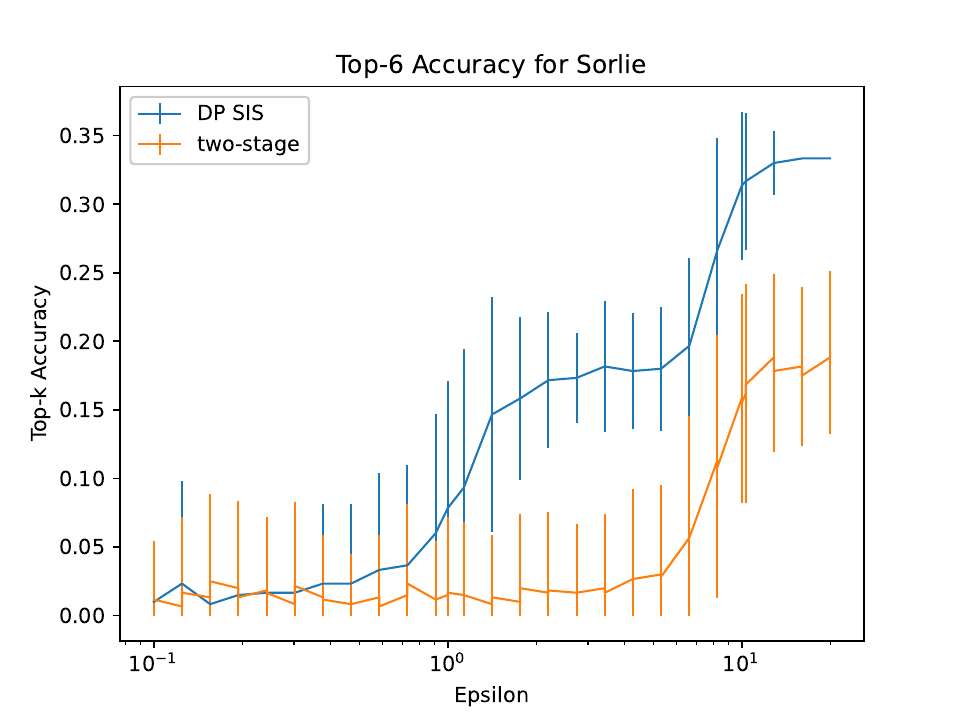}
    \end{subfigure}

    \caption{Top-$k$ accuracy of models fit on features selected from on the Sorlie dataset. DP-SIS outperforms the two-stage mechanism on $\epsilon$ values greater than $10^0$, which are commonly used for private computation \cite{near2023guidelines}.}
    \label{fig:topk_sorlie_subplots}
\end{figure}
\begin{figure}[t]
    \centering
    \begin{subfigure}{}
        \includegraphics[width=0.45\textwidth]{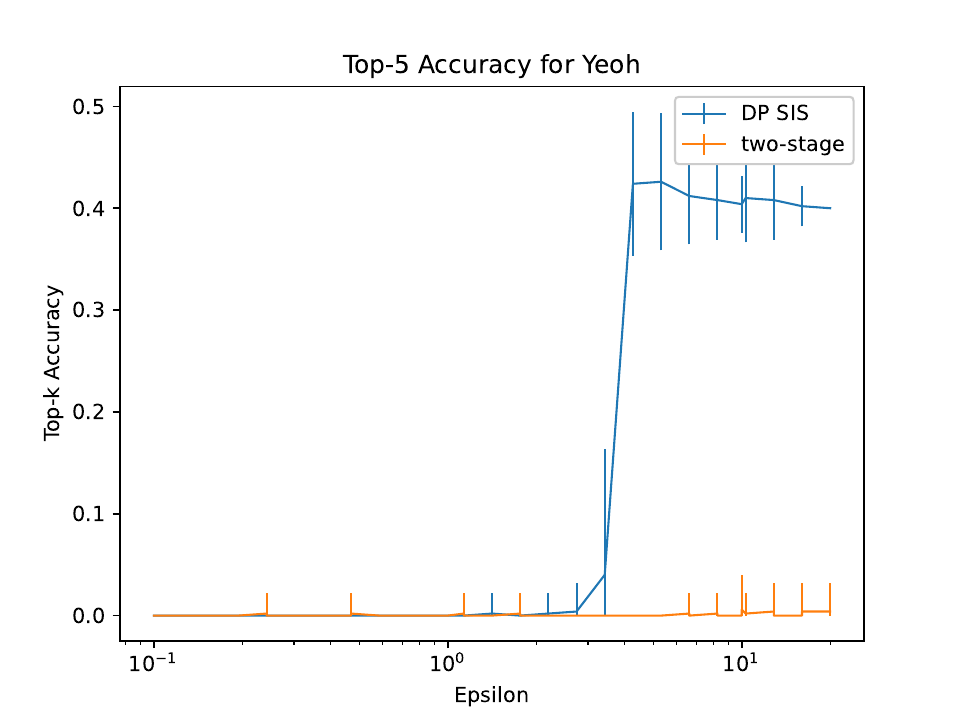}
    \end{subfigure}
    \hfill
    \begin{subfigure}{}
        \includegraphics[width=0.45\textwidth]{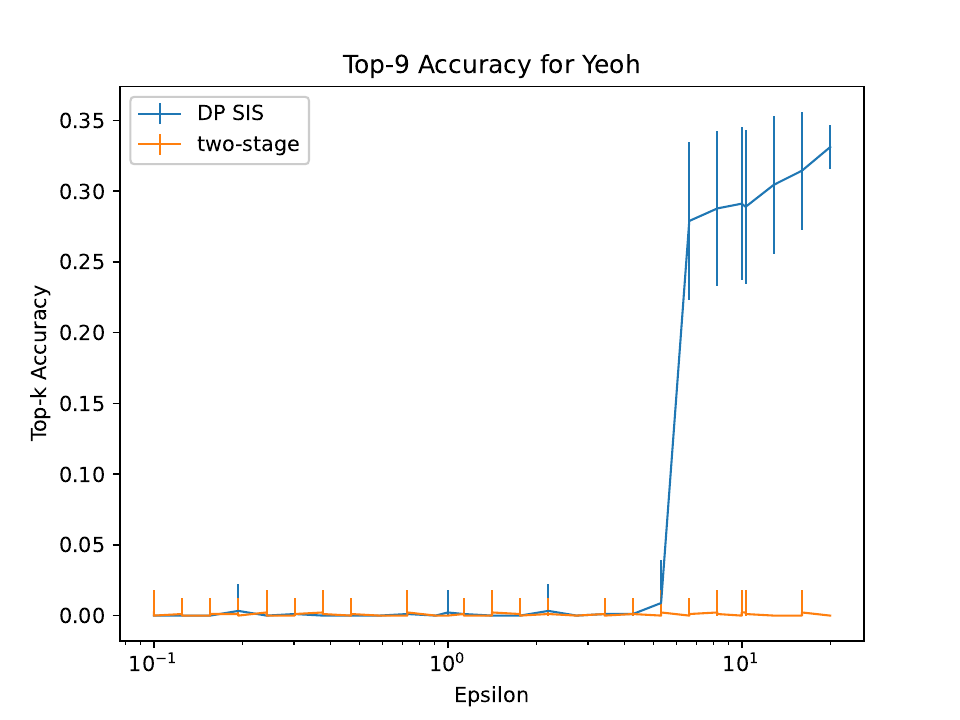}
    \end{subfigure}

    \caption{Top-$k$ accuracy of models fit on features selected from on the Yeoh dataset. DP-SIS outperforms the two-stage mechanism on $\epsilon$ values greater than $2 \times 10^0$, which can be used in private computation \cite{near2023guidelines}.}
    \label{fig:topk_yeoh_subplots}
\end{figure}
\begin{figure}[t]
    \centering
    \begin{subfigure}{}
        \includegraphics[width=0.45\textwidth]{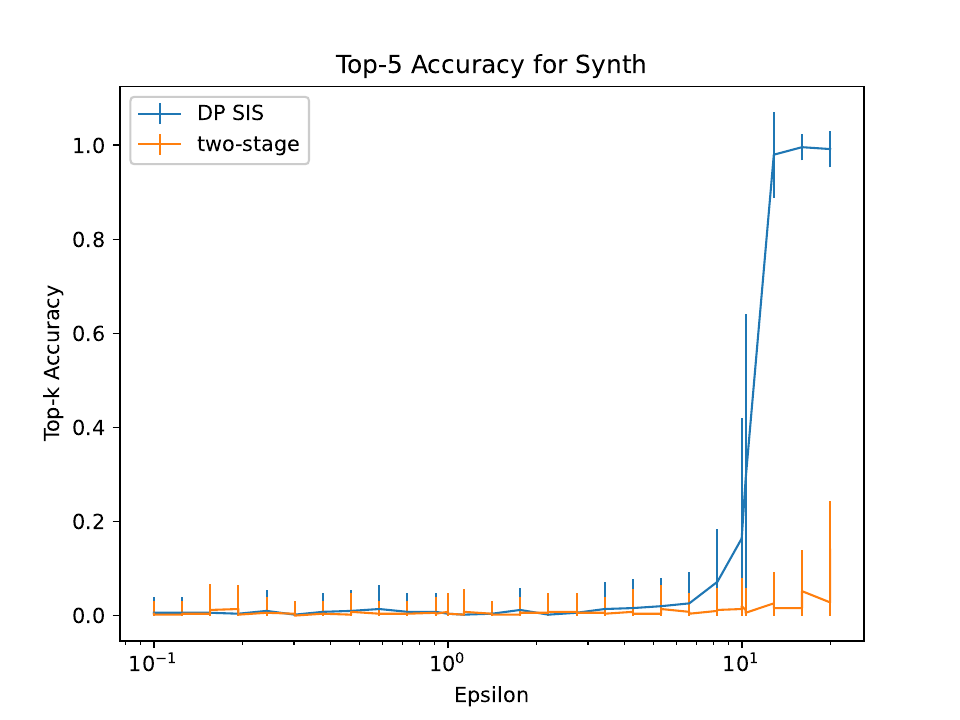}
    \end{subfigure}
    \hfill
    \begin{subfigure}{}
        \includegraphics[width=0.45\textwidth]{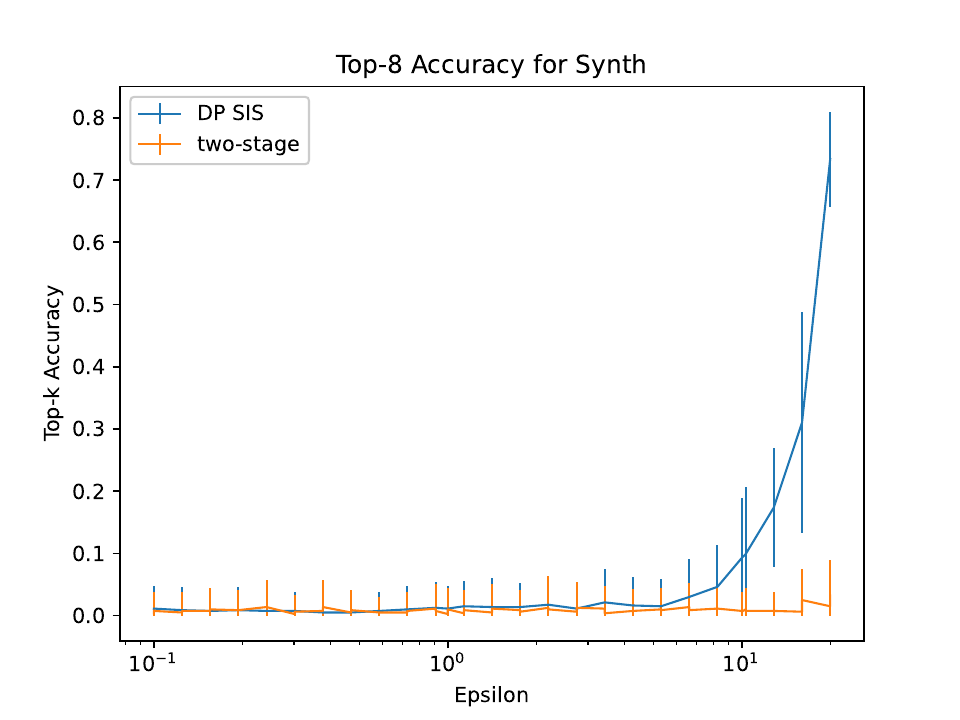}
    \end{subfigure}

    \caption{Top-$k$ accuracy of models fit on features selected from on the Synth dataset. DP-SIS outperforms the two-stage mechanism on $\epsilon$ values greater than $10^1$. Although such high $\epsilon$ values are typically not used for private computation, this result still demonstrates that DP-SIS has better results than the two-stage baseline.}
    \label{fig:topk_synth_subplots}
\end{figure}

\begin{figure}[t]
    \centering
    \includegraphics[width=\linewidth]{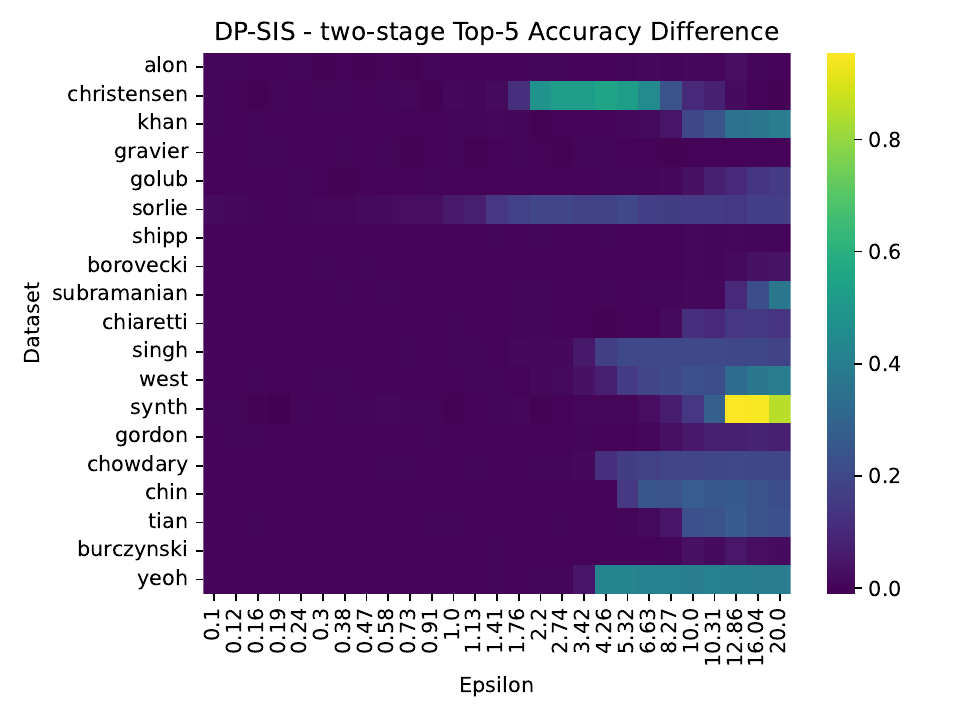}
    \caption{Difference between DP-SIS and two-stage top-$k$ for $k = 5$. On no dataset and $\epsilon$ value does two-stage significantly outperform DP-SIS, but DP-SIS significantly outperforms two-stage on ranges of $\epsilon$ values typically greater than 1.0.}
    \label{fig:heatmap_k_5}
\end{figure}

\begin{figure}[t]
    \centering
    \includegraphics[width=\linewidth]{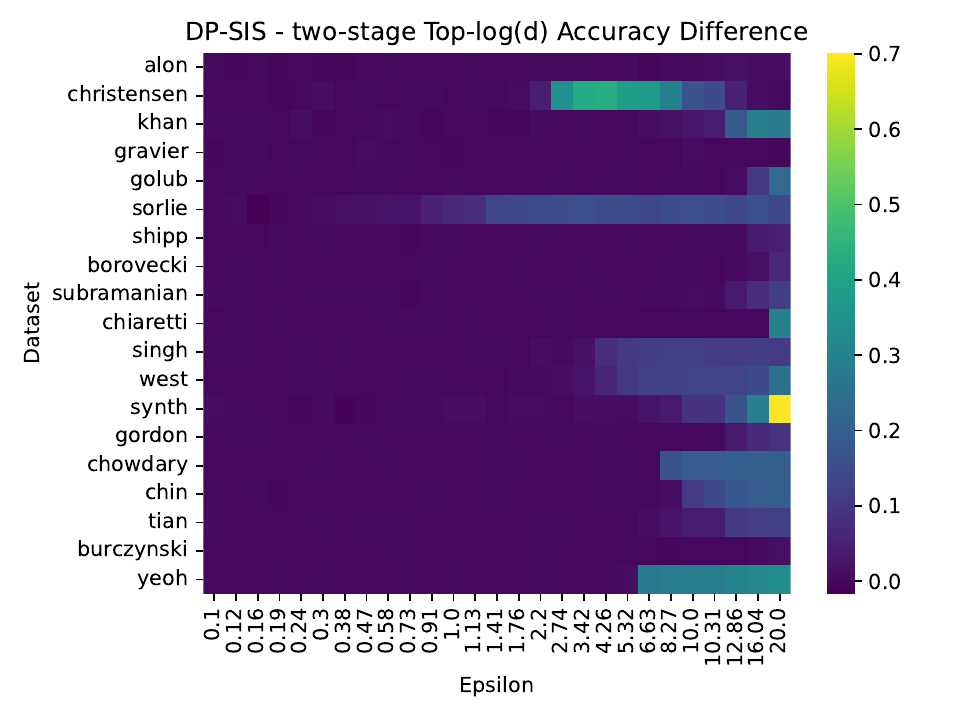}
    \caption{Difference between DP-SIS and two-stage top-$k$ accuracy for $k=\log{d}$. On no dataset and $\epsilon$ value does two-stage significantly outperform DP-SIS, but DP-SIS significantly outperforms two-stage on ranges of $\epsilon$ values typically greater than 1.0.}
    \label{fig:heatmap_k_log_d}
\end{figure} 

We used DP-SIS and the two-stage approach to perform differentially private top-\(k\) feature selection on nineteen common high-dimensional datasets and an additional synthetic dataset. 
The synthetic data was generated using the technique described in section 3.3.1 of \cite{Fan2006SureIS} with $(n,d)=(100, 2000)$. We use the usual linear model, found in \autoref{eq:linear_model}.

\begin{equation}
    \begin{aligned}
        \mathbf{y} &= \mathbf{X} \mathbf{w} + \bm{\epsilon} \\
        \mathbf{X} &\sim \mathcal{N}(\mathbf{0}, \mathbf{I}_d) \\
        \bm{\epsilon} &\sim \mathcal{N}(\mathbf{0}, 1.5\mathbf{I}_d)
    \end{aligned}
    \label{eq:linear_model}
\end{equation}

The weight vector $\mathbf{w}$ was generated by randomly selecting eight indices to serve as the non-zero values in the weight vector. The value of a non-zero index $w_i$ is given by \autoref{synth_data_gen}.

\begin{align}
    w_i &= (-1)^u (a + \left| z \right|) \label{synth_data_gen}
\end{align}
where
\begin{align*}
    u &\sim \text{Bernoulli}(0.4) \\
    z &\sim \mathcal{N}(0, 1) \\
    a &= 4 \frac{\log n}{\sqrt{n}}
\end{align*}

The synthetic dataset served as a useful benchmark as it provided a model with a known solution and therefore a known set of top-k features. For the real datasets, we used used Alon \cite{Alon1999}, Borovecki \cite{Borovecki2005}, Burczynski \cite{Burczynski2006}, Chiaretti \cite{Chiaretti2004}, Chin \cite{Chin2006}, Chowdary \cite{Chowdary2006}, Christensen \cite{Christensen:2009gu}, Golub \cite{Golub1999}, Gordon \cite{Gordon2002}, Gravier \cite{Gravier:2010bz}, Khan \cite{Khan2001}, Shipp \cite{Shipp2002}, Singh \cite{Singh2002}, Sorlie \cite{Sorlie:2001kr}, Subramanian \cite{Subramanian2005}, Tian \cite{Tian2003}, West \cite{West2001}, and Yeoh \cite{Yeoh2002} datasets. These are summarized in \autoref{tab:dataset_linearity}.\footnote{Note that we scale and center our synthetic dataset to match the conditions of DP-SIS. Many of the real datasets are already scaled and centered; for those which were not, we performed this operation.}

Three hyperparameters were used in our experiments: $\epsilon$, $k$, and $\lambda$. $\epsilon$ controlled the privacy budget: smaller values of $\epsilon$ corresponded to higher levels of privacy. $\epsilon$ values between 0.1 and 20 were tested. $k$ corresponded to the number of features to be selected. Finally, $\lambda$ was used as $L_1$ regularizer to perform non-private LASSO regression on the datasets. This is necessary in the two-stage procedure but we also used the top-$k$ components of the non-private LASSO regression as a proxy for the true top-$k$ features, since it is computationally infeasible to identify the optimal features to a $k$-sparse constrained regression problem on datasets with many features. In all presented results, $\lambda$ was set to 0.1 because when running experiments with different $\lambda$ values we found little difference in results. Note that the $\mathbf{R^2}$ scores for the non-private LASSO are listed in \autoref{tab:dataset_linearity}. 

For the synthetic dataset we selected \(k \in \{5,8\}\), as there were only 8 significant features in this synthetic dataset. For the other datasets, we tested \(k \in \{5, \lfloor \log(d) \rfloor \}\). We chose to use 5 since in many cases data analysts seek to identify a small subset of important features in a dataset, and we used $\log(d)$ since theoretical literature on feature selection often focuses on effective recovery of $\log(d)$ features. 

For each $(k, \epsilon)$ pair we performed private feature selection using the DP-SIS mechanism and the two-stage technique for one hundred trials for each dataset. To score private feature selection, we compared the number of correctly chosen top-$k$ features to $k$, producing an accuracy score. Experiments were run on an high-performance computing cluster parallelized across several nodes at a time using the SLURM HPC resource manager \cite{yoo2003slurm}. Each node had 40 CPUs and 512 GB of RAM. 

\autoref{fig:topk_christensen_subplots}, \autoref{fig:topk_sorlie_subplots}, \autoref{fig:topk_yeoh_subplots}, and \autoref{fig:topk_synth_subplots} display the performance of DP-SIS and two-stage for the Christensen, Sorlie, Yeoh, and synthetic datasets, respectively. It is expected that for small values of $\epsilon$ like $0.1$, methods will perform poorly due to the difficulty of feature selection, and DP in general, under high privacy constraints. In all plots, we can see that at very small values of $\epsilon$, both DP-SIS and two-stage have poor accuracy as expected. 

However, on each dataset, DP-SIS outperforms the two-stage mechanism for a range of intermediate $\epsilon$ values. This range of intermediate values is in line with real-world $\epsilon$ usage, indicating that our method is in-line and potentially useful for practical model building \cite{near2023guidelines}. This indicates that DP-SIS is a more useful method than the two-stage approach since it can select correct features at lower $\epsilon$ values than the two-stage mechanism, meaning that it can produce useful results while maintaining the privacy of individuals in the datasets. 

\autoref{fig:heatmap_k_5} and \autoref{fig:heatmap_k_log_d} show that this trend holds for more datasets. These heatmaps show that the two-stage method does not significantly outperform DP-SIS on any dataset, and for many datasets, DP-SIS has much better accuracy than the two-stage method for some $\epsilon$ values. However, some datasets like Alon and Borovecki do not show a difference between DP-SIS and two-stage for any $\epsilon$ value. This is likely because these datasets are higher dimensional and have less stable selected features than other datasets. Indeed, we found that DP-SIS outperforms the two-stage method to a greater degree when there is a stronger linear relationship between the features and the target. For example, DP-SIS outperforms two-stage around $\epsilon=5$ on the Christensen dataset but not on Alon despite the fact that they have comparable sizes ($(217, 1413)$ for Christensen and $(62, 2000)$ for Alon). However, linear regression achieves an $R^2$ of $0.7870$ on Christensen, but only $0.4822$ for Alon.

We also note that even when the accuracy of selected features is poor, DP-SIS achieves a high level of accuracy on the order statistics themselves since it is built on the canonical Lipschitz mechanism. We demonstrate this when examining the $TOP$, $GREAT$, and $GOOD$ performance of DP-SIS. These metrics are defined in \cite{shekelyan2022differentially} as: 

\begin{enumerate}
    \item $TOP$: DP-SIS selects all $k$ of the true top-k order statistics
    \item $GREAT$: DP-SIS selects all of the top $\frac{k}{10}$ order statistics and the rest come from the top $\frac{11k}{10}$
    \item $GREAT$: DP-SIS selects all of the top $\frac{k}{100}$ order statistics and the rest come from the top $\frac{3k}{2}$
\end{enumerate}
For example, if $k=200$, DP-SIS performs $GREAT$ if it selects all of the top 20 true top-k order statistics, and the remaining 180 features are selected from the top 220.

\begin{figure}[!h]
    \centering
    \includegraphics[width=\linewidth]{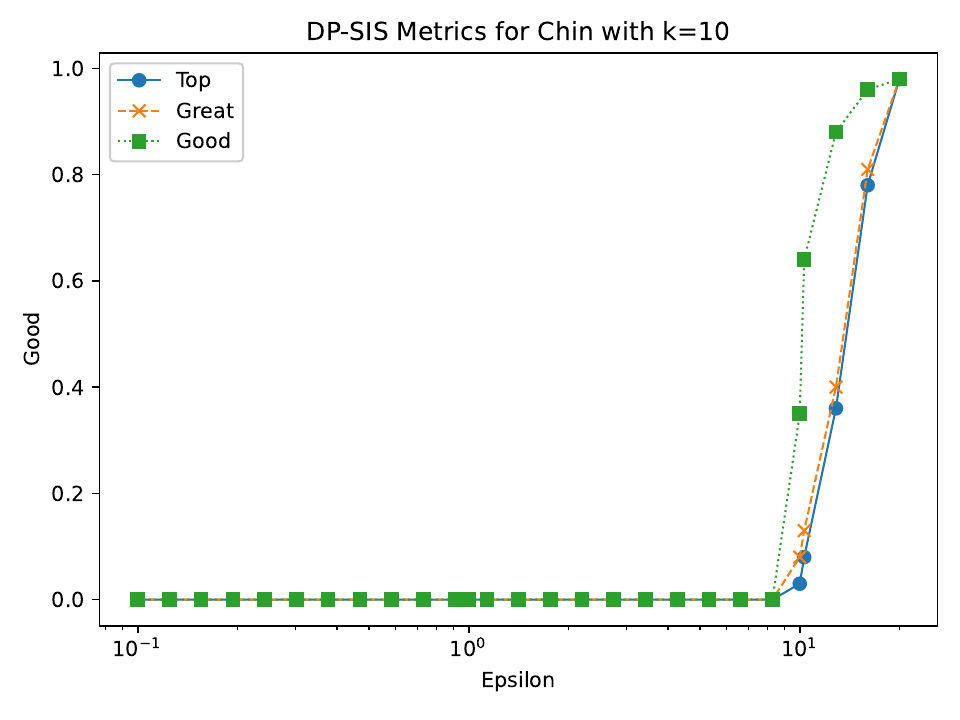}
    \caption{Performance of DP-SIS on $TOP$, $GREAT$, and $GOOD$ metrics. This graph demonstrates that the DP-SIS selection procedure is accurate with respect to order statistics, meaning that if a better representation of feature important than correlation can be engineered, the private selection procedure can be applied with expectations of good performance.}
    \label{fig:dpsis_tgg}
\end{figure}

DP-SIS only uses the order statistics during its top-$k$ selection process, and it can perform well in selecting $GREAT$ and $GOOD$ features on high-dimensional datasets. \autoref{fig:dpsis_tgg} gives an example of this on the Chin dataset. While its top-$10$ accuracy is around 20\% at $\epsilon=20$, the figure demonstrates that it performs very well on the $TOP$, $GREAT$, and $GOOD$ metrics. This indicates that the lower accuracy is a function of the correlation metric used in DP-SIS, and if another metric which better predicts feature importance is used, DP-SIS can be used effectively for private top-$k$ selection. 

\section{Conclusion} \label{sec:conclusion}

This paper seeks to improve upon the state-of-the-art in computationally feasible differentially private feature selection for high-dimensional linear regression. We identify that the current computationally feasible method, the two-stage approach, requires building $\sqrt{N}$ estimators on disjoint partitions of a dataset to identify the most commonly selected features in these estimators. While this is computationally feasible, it is still difficult since it requires building many high-dimensional estimators. Additionally, it requires high-dimensional sparse regression estimators to be algorithmically stable, which they are not. 

SIS is an alternative method for high-dimensional feature selection. It selects the $k$ features which are most correlated with the target vector. By making SIS private with a differentially private top-$k$ selector, we can develop a differentially private feature selector based on SIS. We choose to use DP-SIS based on its superior performance. 

SIS outperforms the two-stage method on most datasets in reasonable ranges of $\epsilon$. Additionally, DP-SIS can be used with any feature selection metric, making it flexible to improved metrics for feature selection developed in the future. 

Finally, we end with a final comment. This paper explores employing a simple metric for feature selection in differential privacy, and pits it against a more complicated mechanism. The simpler metric works better despite it not being able to identify cases in which subsets of features are individually weakly correlated with the target but jointly strongly correlated with the target. This is because when using differential privacy, there is a constant tug-of-war between employing more expressive methods with more noise and simpler methods with less noise. In the case presented in this paper, feature selection had higher accuracy when a simpler method was selected which was more stable and required less noise.

\bibliographystyle{ACM-Reference-Format}
\bibliography{main,raffRefs}

\appendix

\section{Appendix} \label{sec:appendix}
\subsection{Canonical Lipschitz Mechanism for Top-\(k\) Selection} 
The original paper introducing the canonical Lipschitz mechanism \cite{shekelyan2022differentially} has received little attention despite its strong results. We believe that this is in part due to the paper's presentation. In this appendix, we will attempt to review and clarify the main ideas behind using the canonical Lipschitz mechanism for \(\epsilon\)-differentially private top-\(k\) selection.  

Suppose we have a dataset containing information about some individuals. For example, in the regression context we can imagine that our design matrix \(X \in \mathbb{R}^{N \times d}\) is a dataset containing information about \(N\) individuals, with each row corresponding to a single individual. Given a dataset \(\hat{x} \in \mathbb{X}\) we can define a scoring function \(f : \mathbb{X} \to \mathbb{R}^d\) which returns a vector of \(d\) scores given some dataset in \(\mathbb{X}\).
The top-\(k\) problem is concerned with selecting a set of \(k \in \mathbb{Z}^{+}\) scores from a set of \(d > k\) scores in a differentially private manner such that the selected set is ``close'' to the true set of \(k\) largest scores, in some sense, with reasonably high probability.

The canonical Lipschitz mechanism aims to solve this problem by considering all \({d}\choose{k}\) possible \(k\)-subsets of the score indices \(\{1, ..., d\}\), 
assigning a value to each subset, and returning the subset with the largest value. The value assigned to a \(k\)-subset is the sum of two terms: a deterministic utility term (which is a negated loss term) and a randomized noise term. The utility and noise are scaled in specific ways such that \(\epsilon\)-differential privacy is achieved. The intuition is that the utility term will be larger (i.e. less negative) for index sets that are ``closer'' to the true top-\(k\) index set and each noise term is large enough such that differential privacy is maintained. 

\subsection{Top-\(k\)}

Suppose we have a dataset \(\hat{x} \in \mathbb{X}\) and a score function \(f : \mathbb{X} \to \mathbb{R}^{d}\) where each component of the score function has sensitivity \(\Delta_f\) meaning that \(|f_i(\hat{x}_1) - f_i(\hat{x_2})| \leq \Delta_f\) for \(i \in \{1, ..., d\}\) for all \(\hat{x}_1, \hat{x}_2 \in \mathbb{X}\) that differ by exactly one individual. Let \(x \in \mathbb{R}^{d}\) be a vector of normalized scores so that \(x_i = f_i(\hat{x})/\Delta_f\) for \(i \in \{1, ..., d\}\). The score vector \(x\) has descending order statistics 
\(x_{[1]} \geq ... \geq x_{[d]}\) and, just as in \cite{shekelyan2022differentially}, we let \(j_1, ..., j_d\ \in \{1,...,d\}\) be the indices such that \(x_{j_1} = x_{[1]}, ..., x_{j_d} = x_{[d]}.\) The index set \(\{j_1, ..., j_k\}\) is the true top-\(k\) index set.

Let \(\mathbb{Y}\) be the set of all size-\(k\) subsets of \(\{1, ..., d\}\). This is the set of size-\(k\) index sets and ultimately we will use the canonical Lipschitz mechanism to randomly select an index set that approximates the top-\(k\) index set by assigning a value to every \(y \in \mathbb{Y}\). We note that the ``canonical'' Lipschitz mechanism is a special case of the more general Lipschitz mechanism that uses a specific  ``canonical`` loss function. Below we describe the Lipschitz mechanism for top-\(k\) selection, but additionally note that the Lipschitz mechanism for top-\(k\) is a special case of the general Lipschitz mechanism described in \cite{shekelyan2022differentially} which contains an additional parameter \(\kappa\) (not to be confused with \(k\)) and returns the top-\(\kappa\) highest valued objects. In the case of top-\(k\) selection, we use \(\kappa=1\) as each object we are concerned with is a size-\(k\) index set. With \(\kappa > 1\), the Lipschitz mechanism would (a bit confusingly) return the top-\(\kappa\) size-\(k\) subsets that have the highest value (in other words, the top-\(\kappa\) differentially private approximations of the top-\(k\) index set).

\begin{definition}[Lipschitz Mechanism, \(\kappa=1\)]
Let \(F\) be a cumulative distribituion function such that \(\log(1 - F(x))\) is \(1\)-Lipschitz continuous and let \(F^{-1}\) be the corresponding inverse cumulative distribution function. Let \(\mathbb{Y}\) be the domain the mechanism selects its output from and let \(U_y \sim Unif(0, 1)\) for each \(y \in \mathbb{Y}\) be i.i.d.
Let \(\hat{x} \in \mathbb{X}\) be a dataset and \(f : \mathbb{X} \to \mathbb{R}^{d}\) be a score function with component-wise sensitivity \(\Delta_f\) so that we have the (normalized) score vector \(x \in \mathbb{R}^d\) such that \(x_i = f_i(\hat{x})/\Delta_f\) for \(i \in \{1, ..., d\}\). Let \(\text{LOSS}(\cdot|\cdot) : \mathbb{Y} \times \mathbb{R}^{d} \to \mathbb{R}_{\geq 0}\) be a loss function with sensitivity \(\Delta_{\text{LOSS}}\).
Let \(\epsilon > 0\) be the privacy loss parameter. The output of the Lipschitz mechanism is \[Y = \argmax_{y \in \mathbb{Y}} \left( -\frac{\epsilon}{2 \Delta_{\text{LOSS}}} \text{LOSS}(y | x) + F^{-1}(U_y) \right).\]
\end{definition}

In the top-\(k\) setting, the selection domain \(\mathbb{Y}\) will be the set of size-\(k\) subsets of the index set \(\{1, ..., d\}\). However, notice that \(\mathbb{Y}\) is exponentially large, containing \({d}\choose{k}\) elements, each of which we would need to compute the utility and noise terms of if we were to naively implement the Lipschitz mechanism, which would be prohibitively expensive. The authors of the original paper \cite{shekelyan2022differentially} cleverly get around this by making a specific choice of loss function such that many elements of \(\mathbb{Y}\) share the same loss value and \(\mathbb{Y}\) can ultimately be partitioned into only \(O(dk)\) utility classes. Using the fact that \(U_0^{1/m}\) is equal in distribution to \(\max \{U_1, ..., U_m\}\) for i.i.d. \emph{standard} uniform variables \(\{U_i\}_{i=0}^{m}\) along with the fact that \(F^{-1}\) is an increasing function, we only need to generate \(F^{-1}(U^{1/m})\) to find the maximal noise term (which is all we care about since all elements in the utility class share the same loss value) rather than generating \(m\) different samples and then taking the maximum. 

\subsection{Canonical Loss Function}

Let \(y \in \mathbb{Y}\) be some size-\(k\) index set and \(x\in\mathbb{R}^{d}\) be some normalized score vector corresponding to some score function \(f\) and dataset \(\hat{x} \in \mathbb{X}\) such that \(x_i = f_i(\hat{x})/\Delta_f\) for \(i \in \{1, ..., d\}\). The so-called canonical loss function that \cite{shekelyan2022differentially} introduces is given by 
\[\text{LOSS}(y | x) = \min_{v \in \text{OPT}^{-1}(y)} \|x-v\|_{\infty}\]
where \(OPT^{-1}(y)\) is the set of all \(v \in \mathbb{R}^d\) whose \(k\) largest values have the indices \(y = \{y_1, ..., y_k\}\). We note that this loss function implicitly depends on the dataset \(\hat{x}\) through its explicit dependence on the score vector \(x\) and it can be shown (see \cite{shekelyan2022differentially}) that this loss function has a sensitivity of \(\Delta_{\text{LOSS}}=1\), a result which relies on the fact that the score vector was appropriately normalized by \(\Delta_f\). 

This canonical loss function is special because it allows us to partition \(\mathbb{Y}\) into disjoint utility classes where all elements in a given utility class have the same canonical loss value. The reason this partition can be made is because it can be shown that 
\[\text{LOSS}(y | x) = \min_{v \in \text{OPT}^{-1}(y)} \|x-v\|_{\infty} = \frac{x_{[h+1]} - x_{[t]}}{2}\]
where \(h\) is the largest integer strictly less than \(k\) such that all of \(\{j_1, ..., j_h\}\) are in \(y\) and \(t\) is the smallest integer greater than or equal to \(k\) such that all of \(\{j_{t+1}, ..., j_{d}\}\) are \emph{not} included in \(y\).
Each utility class \(\mathcal{C}_{h, t}\) is parameterized by two integers: \(h \in \{0, 1, ..., k-1\}\) and \(t \in \{k, ..., d\}\) where \(t=k\) is only allowed if \(h=k-1\) in order to ensure that each element of \(\mathcal{C}_{h, t}\), defined below, contains a total of \(k\) indices.
\begin{definition}[Utility Class]
The utility class \(\mathcal{C}_{h, t}\) is the set of all subsets of the form \(\{j_1, ..., j_h\} \cup \mathcal{B} \cup \{j_t\}\) with \(\mathcal{B} \subseteq \{j_{h+1}, ..., j_{t-1} \}\) and \(h + |\mathcal{B}| + 1 = k\). If \(h=0\), then we define \(\{j_1, ..., j_h\}\) to be the empty set.
\end{definition}
Borrowing terminology from \cite{shekelyan2022differentially}, each \(y \in \mathcal{C}_{h, t}\) comprises of a ``head'' \(\{j_1, ..., j_h\}\), a ``body'' \(\mathcal{B}\), and a ``tail'' \(\{j_t\}\). Every \(y \in \mathcal{C}_{h, t}\) contains the true top-\(h\) indices and the remaining \(k-h\) indices are all within the true top-\(t\) indices, so \(j_{h+1}\) is the best index not included in \(y\) and \(j_{t}\) is the worst index included in \(y\). Consequently, we see that the canonical loss \(\text{LOSS}(y|x)\) is constant across all \(y \in \mathcal{C}_{h, t}\) because the loss only depends on the best missing score \(x_{[h+1]}\) and the worst included score \(x_{[t]}\). We note that the canonical loss function can be generalized to the form \(\text{LOSS}(y|x) = (1-\gamma)x_{[h+1]} - \gamma x_{[t]}\) for \(\gamma \in [0,1]\), where above we had the case where \(\gamma=1/2\). This generalized loss function also has a sensitivity of 1 (provided a normalized score vector) and \cite{shekelyan2022differentially} consider additional optimizations that can be made to the algorithm in the \(\gamma=1\) case, but we only discuss the \(\gamma \in [0,1)\) case for simplicity.

\subsection{Implementation}

The Lipschitz mechanism in the top-\(k\) setting returns the size-\(k\) index set which maximizes, over all \(y \in \mathbb{Y}\), a value which is the sum of a deterministic utility term and a random noise term. To efficiently implement the canonical Lipschitz mechanism, we only need to iterate through every utility class \(\mathcal{C}_{h,t}\) and compute the maximal noise term for each utility class because every \(y \in \mathcal{C}_{h,t}\) has the same utility and the additive noises are i.i.d. so every \(y \in \mathcal{C}_{h,t}\) is equally likely to receive the largest noise term. Once we have the \(y \in \mathbb{Y}\) with the largest value for each class, we simply return the one with the largest value across all classes. That is, for each \(\mathcal{C}_{h,t}\) we compute 
\[Y_{h,t} = \argmax_{y \in \mathcal{C}_{h,t}}  \left( -\frac{\epsilon}{2 \Delta_{\text{LOSS}}} \text{LOSS}(y | x) + F^{-1}(U_y) \right)\] and the canonical Lipschitz mechanism ultimately returns the \(Y_{h,t}\) with the largest value across all possible \(h\) and \(t\). There are only \( k(d-k) + 1 = dk - k^2 + 1 = O(dk)\) different utility classes, but some utility classes can contain many elements. It is easy to see that the size of a utility class is determined by the number of possible bodies it can have and each body contains \(|\mathcal{B}| = k - h - 1\) elements and there are \((t-1) - (h+1) + 1 = t - h - 1\) possible values that can be chosen from to form a given body, thus \(|\mathcal{C}_{h, t}| = {{t-h-1}\choose{k-h-1}}\). This means that naively we would have to generate exponentially many standard uniform noise terms for a single utility class when \(|\mathcal{C}_{h, t}| = {{t-h-1}\choose{k-h-1}}\) is large, but as we noted earlier we can get around this by only sampling \(U^{1/|\mathcal{C}_{h,t}|}\) with \(U \sim Unif(0,1)\) as this is equal in distribution to the maximum over all of the standard uniform noises and \(F^{-1}\), which is applied to the noise, is an increasing function. This allows us to perform the canonical Lipschitz mechanism for top-\(k\) selection in only \(O(dk)\) time, which is a dramatic reduction from the naive exponential complexity, noting that we can compute the binomial coefficients \(|\mathcal{C}_{h,t}|\) efficiently by updating them as we iterate through \(h\) and \(t\) using the fact that \({n+1 \choose j+1} = {n \choose j} \cdot \frac{n+1}{j+1}\) for some \(0 \leq j \leq n\), see Algorithm \ref{alg}. 

\begin{algorithm}
    \caption{Canonical Lipschitz Mechanism for Top-\(k\)}
    \label{alg}
\begin{algorithmic}[1]
    \State \textbf{Input:} dataset \(\hat{x} \in \mathbb{X}\), score function \(f : \mathbb{X} \to \mathbb{R}^d\) with component-wise sensitivity \(\Delta_f\), subset size \(k \in \{1, ..., d-1\}\), privacy loss \(\epsilon \geq 0\), inverse CDF \(F^{-1}\), loss parameter \(\gamma \in [0, 1)\)
    
    \State Let \(x_i = f_i(\hat{x})/\Delta_f\) for all \(i \in \{1, ..., d\}\) \Comment{Define normalized score vector \(x\)}
    
    \State Compute descending order statistics \(x_{[1]} \geq ... \geq x_{[d]}\) \Comment{Sort \(x\) in \(O(d \log d)\) time}
    
    \State Let \(U_{h,t} \sim Unif(0,1)\) for all \(h \in \{0,...,k-1\}, t \in \{k,...,d\}\) be i.i.d.
    
    \State Let \(\epsilon_1 = (1-\gamma)\epsilon\) and \(\epsilon_2 = \gamma \epsilon\) \Comment{Note that \(\epsilon_1 + \epsilon_2 = \epsilon\)}
    
    \State \textbf{Initialize} \(H = k-1, T = k\) \Comment{Initialize the utility class \(\mathcal{C}_{H,T}\) to only include the true top-\(k\)}
    
    \State \(L_{H,T} = \frac{\epsilon_2 - \epsilon_1}{2} x_{[k]}\) \Comment{Initial utility: \(- \frac{\epsilon}{2 \Delta_{\text{LOSS}}}\text{LOSS}(y|x)\) for \(y \in \mathcal{C}_{k-1,k}. \)}

    \State \(X_{H,T} = F^{-1}(U_{H,T})\) \Comment{Random noise corresponding to \(\mathcal{C}_{k-1, k}\)}

    \State \textbf{Initialize} \(v = L_{H,T} + X_{H,T}\) \Comment{Value for \(\mathcal{C}_{k-1,k}\), value = utility + random noise}

    \For{\(t \in \{k+1, ..., d\}\)}
    \State \textbf{Initialize} \(h = k-1\)
    \State \textbf{Initialize} \(m = 1\) \Comment{Initialize size of utility class \(m=|\mathcal{C}_{k-1,t}| = {t-k \choose 0}\) = 1}
        \While{\(h \geq 0\)}
            \If{\(h < k-1\)}
                \State Update \(m = m \cdot \frac{t - h - 1}{k - h - 1}\) \Comment{Efficiently update binomial coefficient} %
            \EndIf

            \State \(L_{h,t} = \frac{\epsilon_2}{2} x_{[t]} - \frac{\epsilon_1}{2} x_{[h+1]} \) \Comment{Utility term for \(\mathcal{C}_{h,t}\)}
            
            \State \(X_{h,t} = F^{-1}(U_{h,t}^{1/m})\) \Comment{Random noise term for \(\mathcal{C}_{h,t}\)}

            \State \(v' = L_{h,t} + X_{h,t}\) \Comment{Value for \(\mathcal{C}_{h,t}\)}

            \If{\(v' > v\)}
                \State Update \(v=v', H=h, T=t\) \Comment{Update \(\mathcal{C}_{H,T}\) to be \(\mathcal{C}_{h,t}\) if \(\mathcal{C}_{h,t}\) has a larger value}
            \EndIf

            \State Update \(h = h-1\)
            
        \EndWhile
    \EndFor
    \State Return a random index set from \(\mathcal{C}_{H,T}\)
\end{algorithmic}
\end{algorithm}

\subsection{Standard Exponential Noise Generation}

The original paper \cite{shekelyan2022differentially} achieved the best results when using the inverse cumulative distribution function corresponding to a standard exponential distribution so that \(F^{-1} = - \log(1-x)\). We discuss some implementation details when using this choice of \(F^{-1}\) and also discuss some useful properties of the noise distribution that we observed.

As the binomial coefficients \(|\mathcal{C}_{h,t}|\) can be very large, care must be taken when computing the noise term \(F^{-1}(U^{1/|\mathcal{C}_{h,t}|}) = - \log(1 - U^{1/|\mathcal{C}_{h,t}|})\) numerically. For large \(|\mathcal{C}_{h,t}|\), the quantity \(U^{1/|\mathcal{C}_{h,t}|}\) will usually be very close to 1 and so the argument of the logarithm will be very close to zero and numerical precision becomes important so that we avoid adding an artificially large noise term and consequently return bad approximations of the true top-\(k\)\ index set. To address the issues of numerical stability, we Taylor expanded the function \(g(y) = 1 - u^y\) for \(u \in [0,1]\) and \(y \geq 0\) about \(y=0\) and kept at least a quadratic approximation so that \(g(y) \approx -y \log(x) - \frac{1}{2} y^2 \log^{2}(x)\). Given that the approximation is only good when \(y\) is sufficiently close to zero, we only used it when \(y \leq 10^{-8}\) and simply used \(1 - u^y\) otherwise, which worked well in practice. 

A useful property of the noise term \(X_{h,t} = F^{-1}(U^{1/|\mathcal{C}_{h,t}|}) = - \log(1 - U^{1/|\mathcal{C}_{h,t}|})\) with \(U \sim Unif(0,1)\) is that its expected value is a harmonic number. In particular, 
\[\mathbb{E}[X_{h,t}] = \sum_{k=1}^{|\mathcal{C}_{h,t}|} \frac{1}{k}.\]

\begin{theorem}
The expected value of \(X_m = - \log(1 - U^{1/m})\) with \(U \sim Unif(0,1)\) and \(m \in \mathbb{Z}^{+}\) is the \(m^{\rm th}\) harmonic number \(H_m = 1 + 1/2 + 1/3 + \cdots + 1/m\).
\end{theorem}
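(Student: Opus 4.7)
The plan is to identify $X_m = -\log(1 - U^{1/m})$ in distribution with the maximum of $m$ i.i.d.\ standard exponential random variables, and then invoke the classical identity that such a maximum has expectation $H_m$. The argument has two independent parts: a short chain of distributional equivalences, followed by a standard order-statistic computation.

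For the first part, I would begin from the observation already used elsewhere in the paper that $U^{1/m} \stackrel{d}{=} \max\{U_1, \ldots, U_m\}$ for i.i.d.\ standard uniforms, which is immediate from comparing CDFs since $P(U^{1/m} \leq v) = v^m = P(\max_i U_i \leq v)$. Using the symmetry $1 - U_i \stackrel{d}{=} U_i$, this yields $1 - U^{1/m} \stackrel{d}{=} \min_i U_i$. Applying the strictly decreasing map $-\log(\cdot)$ swaps min and max, so $X_m \stackrel{d}{=} \max_i (-\log U_i)$, and each $-\log U_i$ is standard exponential. Hence $X_m$ is distributed as the maximum of $m$ i.i.d.\ $\mathrm{Exp}(1)$ random variables.

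For the second part, I would invoke the classical fact that if $E_{(1)} \leq \cdots \leq E_{(m)}$ are the order statistics of $m$ i.i.d.\ $\mathrm{Exp}(1)$ variables, then by the memoryless property the gaps $E_{(j)} - E_{(j-1)}$ (with $E_{(0)} := 0$) are independent exponentials of rate $m - j + 1$ for $j = 1, \ldots, m$. Writing $E_{(m)} = \sum_{j=1}^m (E_{(j)} - E_{(j-1)})$ and taking expectations gives $\mathbb{E}[E_{(m)}] = \sum_{j=1}^m \tfrac{1}{m-j+1} = \sum_{k=1}^m \tfrac{1}{k} = H_m$, which combined with the first part gives $\mathbb{E}[X_m] = H_m$.

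Neither part presents a real obstacle, as both are textbook. The only care required is to justify each distributional equivalence via CDF comparison rather than informal symbol manipulation. A purely analytic alternative, for readers who prefer to avoid probabilistic reasoning entirely, is to evaluate $\mathbb{E}[X_m] = \int_0^1 -\log(1 - u^{1/m})\,du$ directly: the substitution $v = u^{1/m}$ turns this into $m \int_0^1 v^{m-1}(-\log(1-v))\,dv$, after which expanding $-\log(1-v) = \sum_{k\geq 1} v^k/k$, swapping sum and integral by Tonelli, and applying the partial-fraction identity $\tfrac{1}{k(k+m)} = \tfrac{1}{m}\bigl(\tfrac{1}{k} - \tfrac{1}{k+m}\bigr)$ produces a telescoping series whose value is exactly $H_m$.
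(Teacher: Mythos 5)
Your proof is correct, but it takes a genuinely different route from the paper's. The paper argues purely analytically: it writes $H_m = \int_0^1 \frac{1-x^m}{1-x}\,dx$ via the finite geometric sum, integrates by parts (killing the boundary term with L'H\^opital), and then observes that the substitution $x = u^{1/m}$ turns $\mathbb{E}[X_m] = -\int_0^1 \log(1-u^{1/m})\,du$ into exactly the same integral $-\int_0^1 \log(1-x)\,m x^{m-1}\,dx$. Your main argument is instead probabilistic: the chain $U^{1/m} \stackrel{d}{=} \max_i U_i$, $1-U^{1/m} \stackrel{d}{=} \min_i U_i$, $X_m \stackrel{d}{=} \max_i(-\log U_i)$ identifies $X_m$ as the maximum of $m$ i.i.d.\ standard exponentials, and the R\'enyi spacing decomposition gives $H_m$ immediately; each step checks out. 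Your approach buys conceptual insight that fits the surrounding context particularly well --- the mechanism already uses $U^{1/m}$ precisely because it is the maximum of $|\mathcal{C}_{h,t}|$ uniforms, so your proof explains \emph{why} the expected maximal noise per utility class is a harmonic number rather than merely verifying it. The paper's proof buys brevity and self-containment (one integration by parts, no order-statistics machinery). Your analytic fallback (series expansion plus Tonelli plus partial fractions) is also valid but is a third route, distinct from the paper's integration by parts; it trades the L'H\^opital boundary-term check for a justification of the sum--integral interchange, which Tonelli handles since all terms are nonnegative.
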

\begin{proof}
It is clear that 
\[H_{m} = \int_{0}^{1} \frac{1-x^m}{1-x} dx\]
because \(\frac{1-x^m}{1-x} = 1 + x + x^2 + \cdots x^{m-1}\) for \(x \neq 1\). Integrating by parts, we see
\[H_m = \int_{0}^{1} \frac{1-x^m}{1-x} dx = -(1-x^m) \log(1-x) \bigg\rvert_{0}^{1} - \int_{0}^{1} \log(1-x) mx^{m-1} dx\]
where the first term can be shown to be zero with an application of L'Hôpital's rule. It is also true that 
\[\mathbb{E}[X_{m}] = - \int_0^1  \log(1-u^{1/m}) du = - \int_{0}^{1} \log(1-x) mx^{m-1} dx \] using a substitution of \(x=u^{1/m}\). It follows that \(\mathbb{E}[X_m] = H_m\).
\end{proof}

This fact is useful because it allows us to quantify the average magnitude of the maximal noise added to each utility class which gives us a rough idea of how large our utility terms need to be in order to compete with the noise. This information can help inform whether or not a chosen score function will yield good results for some dataset. For example, the largest utility class \(\mathcal{C}_{0,d}\) will have the largest expected noise term and one can easily compute \(H_m\) with \(m = {d-1 \choose k-1}\) to get a rough sense of how large the noise is. If the utility term is too small, then the noise will dominate and the quality of the returned index set will be worse on average. We note that very good approximations of harmonic numbers can be efficiently computed using well-known asymptotic expansions.

\end{document}